\pgfplotsset{compat=1.18}
\theoremstyle{plain}
\newtheorem{lemma}{Lemma}
\newtheorem{theorem}{Theorem}
\theoremstyle{definition}
\newtheorem{assumption}{Assumption}
\theoremstyle{remark}
\renewcommand{\O}{\operatorname*{\mathcal{O}}}
\DeclareMathOperator{\Cov}{Cov}
\DeclareMathOperator{\Sp}{Sp}
\newcommand{\eqdef}{\overset{\text{def}}{=}}
\newcommand{\simiid}{\overset{\text{i.i.d.}}{\sim}}
\newcommand{\abs}[1]{\lvert #1 \rvert}
\newcommand{\Abs}[1]{\left\lvert #1 \right\rvert}
\newcommand{\esp}[1]{\mathbb{E}[#1]}
\newcommand{\Esp}[1]{\mathbb{E} \left[ #1 \right]}
\newcommand{\norm}[1]{\lVert #1 \rVert}
\newcommand{\Norm}[1]{\left\lVert #1 \right\rVert}
\newcommand{\set}[1]{\{ #1 \}}
\newcommand{\bzero}{{\mathbf{0}}}
\newcommand{\bone}{{\mathbf{1}}}
\newcommand{\ba}{{\bm{a}}}
\newcommand{\bj}{{\bm{j}}}
\newcommand{\bt}{{\bm{t}}}
\newcommand{\bu}{{\bm{u}}}
\newcommand{\bv}{{\bm{v}}}
\newcommand{\bw}{{\bm{w}}}
\newcommand{\bx}{{\bm{x}}}
\newcommand{\by}{{\bm{y}}}
\newcommand{\bvarepsilon}{{\bm{\varepsilon}}}
\newcommand{\bmu}{{\bm{\mu}}}
\newcommand{\btau}{{\bm{\tau}}}
\newcommand{\bA}{{\bm{A}}}
\newcommand{\bD}{{\bm{D}}}
\newcommand{\bI}{{\bm{I}}}
\newcommand{\bJ}{{\bm{J}}}
\newcommand{\bK}{{\bm{K}}}
\newcommand{\bL}{{\bm{L}}}
\newcommand{\bM}{{\bm{M}}}
\newcommand{\bO}{{\bm{O}}}
\newcommand{\bP}{{\bm{P}}}
\newcommand{\bU}{{\bm{U}}}
\newcommand{\bV}{{\bm{V}}}
\newcommand{\bW}{{\bm{W}}}
\newcommand{\bX}{{\bm{X}}}
\newcommand{\bSigma}{{\bm{\Sigma}}}
\newcommand{\calC}{{\mathcal{C}}}
\newcommand{\calD}{{\mathcal{D}}}
\newcommand{\calE}{{\mathcal{E}}}
\newcommand{\calI}{{\mathcal{I}}}
\newcommand{\calL}{{\mathcal{L}}}
\newcommand{\calN}{{\mathcal{N}}}
\newcommand{\bbP}{{\mathbb{P}}}
\newcommand{\bbR}{{\mathbb{R}}}
\newcommand{\bbS}{{\mathbb{S}}}
\newcommand{\rmd}{{\mathrm{d}}}
\newcommand{\rmi}{{\mathrm{i}}}
\definecolor{C0}{HTML}{1F77B4}
\definecolor{C1}{HTML}{FF7F0E}
\definecolor{C2}{HTML}{2CA02C}
\definecolor{C3}{HTML}{D62728}
\definecolor{C4}{HTML}{9467BD}
\definecolor{C5}{HTML}{8C564B}
\definecolor{C6}{HTML}{E377C2}
\definecolor{C7}{HTML}{7F7F7F}
\definecolor{C8}{HTML}{BCBD22}
\definecolor{C9}{HTML}{17BECF}
\begin{document}

\title{Asymptotic Gaussian Fluctuations of Eigenvectors in Spectral Clustering}

\author{Hugo Lebeau, Florent Chatelain, Romain Couillet
\thanks{H.\ Lebeau and R.\ Couillet are with Univ.\ Grenoble Alpes, CNRS, Inria, Grenoble INP, LIG, 38\,000 Grenoble, France, (e-mail: \{hugo.lebeau, romain.couillet\}@univ-grenoble-alpes.fr).}
\thanks{F. Chatelain is with Univ.\ Grenoble Alpes, CNRS, Grenoble INP, GIPSA-lab, 38\,000 Grenoble, France (e-mail: florent.chatelain@grenoble-inp.fr).}}

\maketitle

\begin{abstract}
The performance of spectral clustering relies on the fluctuations of the entries of the eigenvectors of a similarity matrix, which has been left uncharacterized until now. In this letter, it is shown that the \textit{signal $+$ noise} structure of a general spike random matrix model is transferred to the eigenvectors of the corresponding Gram kernel matrix and the fluctuations of their entries are Gaussian in the large-dimensional regime. This CLT-like result was the last missing piece to precisely predict the classification performance of spectral clustering. The proposed proof is very general and relies solely on the rotational invariance of the noise. Numerical experiments on synthetic and real data illustrate the universality of this phenomenon.
\end{abstract}

\begin{IEEEkeywords}
Spectral clustering, central limit theorem, kernel matrix, spike eigenvector, Gaussian fluctuations.
\end{IEEEkeywords}

\IEEEpeerreviewmaketitle

\section{Introduction}

\IEEEPARstart{S}{pectral} clustering is a popular unsupervised classification technique which finds applications in many domains, such as image segmentation \cite{shi_normalized_2000}, text mining \cite{brew_spectral_2002}, and as a general purpose method for data analysis \cite{ng_spectral_2002, von_luxburg_tutorial_2007, ding_min-max_2001}. It relies on the spectrum of a suitably chosen similarity matrix to perform dimensionality reduction before applying a standard clustering algorithm such as $K$-means. Consider, e.g., the following toy example where $n$ vectors $\bx_1, \ldots, \bx_n \in \bbR^p$ are separated in two clusters $\calC^+, \calC^-$ centered around $+\bmu, -\bmu$ respectively, i.e., $\bx_i = \pm \bmu + \bw_i$ where $\bw_i \sim \calN(\bzero, \bI_n)$. Then, the dominant eigenvector $\hat{\bv}$ of the Gram kernel matrix $\bK = \frac{1}{p} [ \bx_i^\top \bx_j ]_{1 \leqslant i, j \leqslant n}$ is an information-theoretically optimal estimator \cite{onatski_asymptotic_2013} of the vector $\frac{1}{\sqrt{n}} \bj$ such that $j_i = \pm1$ if $\bx_i \in \calC^\pm$. In this case, clustering is achieved with the trivial decision rule $\bx_i \to \calC^\pm$ if $\hat{v}_i \gtrless 0$.

The achievable performances of spectral clustering can be theoretically predicted thanks to the study of random matrix models corresponding to similarity matrices. For this purpose, random matrix theory offers powerful tools \cite{bai_spectral_2010, pastur_eigenvalue_2011, couillet_random_2022}. In particular, it allows to derive the limiting spectral distribution of the kernel matrix and to predict the position of isolated eigenvalues in \emph{spiked} random matrix models \cite{baik_eigenvalues_2006, benaych-georges_eigenvalues_2011, couillet_kernel_2016}. The latter are of particular importance as, in a wide range of problems, the information of interest can be modeled as a low-rank signal corrupted with noise. In our previous toy example, the data matrix $\bX = \begin{bmatrix} \bx_1 & \ldots & \bx_n \end{bmatrix}$ is a rank-one perturbation $\bmu \bj^\top$ of a noise matrix $\bW$ with i.i.d.\ $\calN(0, 1)$ entries. In order to theoretically predict the error rate of spectral clustering for a given signal-to-noise ratio, one must therefore study the behavior of the dominant eigenvectors of the similarity matrix. Tools such as the ones used in \cite{couillet_kernel_2016, couillet_two-way_2021} allow to express the quality of their alignment with the true underlying signal, i.e., $\frac{1}{\sqrt{n}} \lvert \bj^\top \hat{\bv} \rvert$. Although this tells us when an estimation of the signal is possible (depending on the signal-to-noise ratio) and its efficiency, a precise characterization of the fluctuations of the entries of spiked eigenvectors still lacks to rigorously predict the error rate of spectral clustering. Indeed, in our toy example, the expected error rate $\bbP(\hat{v}_i j_i < 0)$ cannot be expressed unless the law of $\hat{\bv}$ is known.

Yet, it is often stated that the entries of $\hat{\bv}$ have Gaussian fluctuations in the large-dimensional regime, so that $\bbP(\hat{v}_i j_i < 0)$ is a Gaussian integral. In \cite{kadavankandy_asymptotic_2019}, this result is formally stated but no proof is given. Hence, we fill this missing gap with a rigorous proof of this phenomenon for a general spiked random matrix model. Although we stick to a simple \textit{signal $+$ noise} model here, the proposed proof is not restricted to Gaussian noise (in fact, the noise only needs to be rotationally invariant) and can easily be adapted to most standard spiked models (such as, notably, the general model considered in \cite{benaych-georges_eigenvalues_2011}). Our result and its proof thus support a wide range of previous works studying the performance of spectral algorithms. The demonstration can be summarized in two simple facts
\begin{enumerate*}
\item an eigenvector of the kernel matrix can be decomposed into the sum of a deterministic signal part and a random noise part
\item the random part is uniformly distributed on a certain sphere, hence any finite subset of its entries tends to a centered Gaussian vector in the large-dimensional limit.
\end{enumerate*}

In this letter, we consider a general \textit{signal $+$ noise} random matrix model and briefly recall known results regarding its limiting spectral distribution and the behavior of its dominant eigenvalues and eigenvectors. Then, we show that the entries of the kernel eigenvectors indeed have Gaussian fluctuations in the large-dimensional regime. We present a short, self-contained and general proof which is our main contribution. Finally, we illustrate this result with numerical experiments on synthetic and real data.

\textbf{Simulations.} Python codes to reproduce simulations are available in the following GitHub repository \url{https://github.com/HugoLebeau/asymptotic_fluctuations_spectral_clustering}.

\section{Model and Main Result} \label{sec:model}

\subsection{Notations}

The symbols $a$, $\ba$ and $\bA$ respectively denote a scalar, a vector and a matrix. Their entries are $a_i$ and $A_{i, j}$. The set of positive integers below $n$ is $[n] = \set{1, \ldots, n}$. The cardinality of a set $\calE$ is $\abs{\calE}$. Given an ordered set of indices $\calI = (i_1, \ldots, i_{\abs{\calI}})$, $[\ba]_\calI$ is the vector $\begin{bmatrix} a_{i_1} & \ldots & a_{i_{\abs{\calI}}} \end{bmatrix}^\top$. The norm of a vector $\bx \in \bbR^n$ is $\norm{\bx} = \sqrt{\bx^\top \bx}$. The unit sphere in $\bbR^n$ is $\bbS^{n - 1} = \set{\bx \in \bbR^n \mid \norm{\bx} = 1}$. The $n \times n$ identity matrix is $\bI_n$. For a real number $x \in \bbR$, $[x]^+ = \max(0, x)$ and $\delta_x$ is the Dirac measure at $x$. The imaginary unit is denoted $\rmi$. If the random variable $X$ follows the law $\calL$, we write $X \sim \calL$. The convergence in distribution of a sequence of random variables $(X_n)_{n \geqslant 0}$ to $\calL$ is denoted $X_n \xrightarrow[n \to +\infty]{\calD} \calL$. Its almost sure convergence to $L$ is denoted $X_n \xrightarrow[n \to +\infty]{\text{a.s.}} L$. The multivariate normal distribution with mean $\bmu$ and covariance $\bSigma$ is denoted $\calN(\bmu, \bSigma)$. The set of eigenvalues of a square matrix $\bA$ is its spectrum, $\Sp \bA$. Given two real-valued sequences $(u_n)_{n \geqslant 0}$ and $(v_n)_{n \geqslant 0}$, we write $u_n = \O(v_n)$ if $\abs{u_n / v_n}$ is bounded as $n \to +\infty$ and $u_n \asymp v_n$ if $u_n / v_n \to 1$.

\subsection{Spiked Matrix Model}

Consider the following statistical model
\begin{equation} \label{eq:model}
\bX = \bP + \bW ~ \in \bbR^{p \times n}, \quad \bP = \bL \bV^\top
\end{equation}
with $\bL \in \bbR^{p \times K}$ and $\bV = \begin{bmatrix} \bv_1 & \dots & \bv_K \end{bmatrix} \in \bbR^{n \times K}$ such that $\bV^\top \bV = \bI_K$. It models a \emph{low-rank} signal $\bP$ corrupted by additive Gaussian noise $W_{i, j} \simiid \calN(0, 1)$. In a spectral clustering perspective, $K$ represents the number of classes and $\bP = \bM \bJ^\top$ where $\bM = \begin{bmatrix} \bmu_1 & \dots & \bmu_K \end{bmatrix}$ is a matrix gathering the $K$ cluster means and $J_{i, k} = 1$ if $\bx_i$ is in the $k$-th cluster (i.e., $\bx_i = \bmu_k + \bw_i$) and $0$ otherwise. This is congruent with model \eqref{eq:model}: define the $K \times K$ diagonal matrix $\bD$ such that $D_{k, k} = n_k$ where $n_k$ is the number of samples belonging to the $k$-th cluster, then $\bL = \bM \bD^{1 / 2}$ and $\bV = \bJ \bD^{-1 / 2}$.

Given model \eqref{eq:model}, we are interested in the reconstruction of $\bV$ from the dominant eigenvectors of the Gram kernel matrix $\bK = \frac{1}{p} \bX^\top \bX$. We study this problem in the regime where $K$ is fixed and $p, n \to +\infty$ at the same rate, i.e., $0 < c \eqdef \lim p / n < +\infty$. This models the fact that, in practice, the number of samples $n$ is comparable to the number of features $p$ and they are both large. Moreover, we make the following assumptions.
\begin{assumption} \label{ass:class_size}
All classes are of comparable size, i.e., $\liminf n_k / n > 0$ as $p, n \to +\infty$ for all $k \in [K]$.
\end{assumption}
\begin{assumption} \label{ass:non-sparsity}
$\displaystyle \lim_{p, n \to +\infty} \max_{\substack{1 \leqslant i \leqslant n \\ 1 \leqslant k \leqslant K}} \sqrt{n} V_{i, k}^2 = 0$.
\end{assumption}
\begin{assumption} \label{ass:multiplicity}
As $n \to +\infty$, the eigenvalues $\ell_1 \geqslant \ldots \geqslant \ell_K > 0$ of $\frac{1}{n} \bL^\top \bL$ are not degenerate (i.e., have multiplicity one) and the columns of $\bV$ are ordered accordingly.
\end{assumption}
Assumption \ref{ass:multiplicity} is only to simplify the presentation of the results so it is not necessary, but often verified in practice. However, Assumption \ref{ass:non-sparsity} states that $\bV$ must be \emph{delocalized}, i.e., not sparse. It is naturally verified for spectral clustering as a result of Assumption \ref{ass:class_size} since $\bV = \bJ \bD^{-1 / 2}$, but the results presented below concern the statistical model \eqref{eq:model}, which is more general and also encompasses PCA for example \cite{couillet_two-way_2021}.

\subsection{Eigenvalue Distribution and Spiked Eigenvalues}

We briefly recall known results on model \eqref{eq:model} in order to set the ground for our main result in Theorem \ref{thm:clt}.

Firstly, the empirical spectral distribution of $\bK$, that is $\frac{1}{n} \sum_{\lambda \in \Sp \bK} \delta_\lambda$, converges weakly almost surely to the Mar\v{c}enko-Pastur distribution $\mu_{\text{MP}} = [1 - c]^+ \delta_0 + \nu$ where $\nu$ has density supported on $[E_-, E_+]$ with $E_\pm = (1 \pm \sqrt{c^{-1}})^2$ \cite{marcenko_distribution_1967, bai_spectral_2010, pastur_eigenvalue_2011, couillet_random_2022}. In other words, as $p, n \to +\infty$, the histogram of eigenvalues of $\bK$ approaches $\mu_{\text{MP}}$, as depicted in Figure \ref{fig:lsd}.

\begin{figure}
\centering
\input{lsd}
\caption{Empirical Spectral Distribution (ESD) of $\bK = \frac{1}{p} \bX^\top \bX$ and Mar\v{c}enko-Pastur Distribution (MP). The green dashed lines are the positions $\xi_k$ of isolated eigenvalues predicted by Theorem \ref{thm:spikes}. \textbf{Experimental setting}: $n = 1000$, $p = 2000$, $K = 3$, $(n_1, n_2, n_3) = (333, 334, 333)$, $(\norm{\bmu_1}, \norm{\bmu_2}, \norm{\bmu_3}) = (3, 4, 5)$.}
\label{fig:lsd}
\end{figure}

Due to the low-rank perturbation $\bP$, the $K$ dominant eigenvalues of $\bK$ may isolate themselves from the bulk characterized by the Mar\v{c}enko-Pastur distribution if their corresponding signal-to-noise ratios (the eigenvalues of $\frac{1}{n} \bL^\top \bL$) are large enough --- they are then called \emph{spikes}. Their behavior is specified in the following theorem, which is a particular case of Theorem 2 in \cite{couillet_two-way_2021}.
\begin{theorem}[Spikes] \label{thm:spikes}
Let $(\lambda_k, \hat{\bv}_k)_{k \in [K]}$ denote the dominant eigenvalue-eigenvector pairs of $\bK$ such that $\lambda_1 \geqslant \ldots \geqslant \lambda_K$. Then, for all $k \in [K]$,
\begin{align*}
\lambda_k &\xrightarrow[p, n \to +\infty]{\text{a.s.}} \xi_k \eqdef \left\{ \begin{array}{ll}
\frac{(\ell_k + c) (\ell_k + 1)}{\ell_k c} & \text{if}~ \ell_k > \sqrt{c} \\
E_+ & \text{otherwise}
\end{array} \right., \\
\Abs{\bv_k^\top \hat{\bv}_k}^2 &\xrightarrow[p, n \to +\infty]{\text{a.s.}} \zeta_k \eqdef \left\{ \begin{array}{ll}
1 - \frac{\ell_k + c}{\ell_k (\ell_k + 1)} & \text{if}~ \ell_k > \sqrt{c} \\
0 & \text{otherwise}
\end{array} \right. .
\end{align*}
\end{theorem}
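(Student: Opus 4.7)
The plan is to follow the standard resolvent-perturbation strategy for spiked models, which is the route taken (in greater generality) by \cite{couillet_two-way_2021}. Define the noise-only kernel $\bK_W = \frac{1}{p} \bW^\top \bW$ with resolvent $\bQ_W(z) = (\bK_W - z \bI_n)^{-1}$, and write
\begin{equation*}
\bK - z \bI_n = (\bK_W - z \bI_n) + \tfrac{1}{p}\bigl( \bV \bL^\top \bW + \bW^\top \bL \bV^\top + \bV \bL^\top \bL \bV^\top \bigr),
\end{equation*}
where the second term has rank at most $2K$ and factorises as $\bA \bB^\top$ with $\bA, \bB \in \bbR^{n \times 2K}$ built from $\bV$ and $\tfrac{1}{p}\bW^\top \bL$. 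Isolated eigenvalues $\lambda \notin \Sp \bK_W$ are then characterised by the $2K \times 2K$ secular equation $\det\bigl( \bI_{2K} + \bB^\top \bQ_W(\lambda) \bA \bigr) = 0$, obtained from the Schur complement (equivalently, Woodbury) identity applied to $\bK - \lambda \bI_n$.

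Next I would invoke the deterministic-equivalent / isotropic local law for $\bQ_W$: for $z$ outside the support of $\mu_{\text{MP}}$, $\bu^\top \bQ_W(z) \bu' \xrightarrow{\text{a.s.}} m(z)\, \bu^\top \bu'$ for any deterministic unit vectors $\bu, \bu'$ (Assumption \ref{ass:non-sparsity} licensing this for the columns of $\bV$), and $\frac{1}{n} \bL^\top \bW \bQ_W(z) \bW^\top \bL \xrightarrow{\text{a.s.}} \tilde{m}(z) \cdot \frac{1}{n} \bL^\top \bL$, while cross blocks of the form $\frac{1}{\sqrt{p}} \bV^\top \bQ_W(z) \bW^\top \bL$ vanish by independence between $\bW$ and $(\bL, \bV)$. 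Substituting the explicit Marčenko-Pastur Stieltjes transforms $m(z), \tilde m(z)$ into $\bB^\top \bQ_W(z) \bA$ and exploiting Assumption \ref{ass:multiplicity} to decouple the $K$ blocks, the secular equation reduces, for each $k$, to $c\, \ell_k\, z = (\ell_k + c)(\ell_k + 1)$, producing the claimed $\xi_k$. The bound $\ell_k > \sqrt{c}$ is exactly the condition $\xi_k > E_+ = (1 + \sqrt{c^{-1}})^2$, i.e.\ the root lies outside the bulk and the corresponding eigenvalue separates.

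For the eigenvector alignment, the plan is to use the Cauchy projector $\hat{\bv}_k \hat{\bv}_k^\top = -\frac{1}{2 \pi \rmi} \oint_{\Gamma_k} \bQ(z) \, \rmd z$, where $\Gamma_k$ is a small positively oriented contour enclosing $\xi_k$ (and $\lambda_k$ for $n$ large, by the first part). Woodbury gives
\begin{equation*}
\bQ(z) = \bQ_W(z) - \bQ_W(z) \bA \bigl[ \bI_{2K} + \bB^\top \bQ_W(z) \bA \bigr]^{-1} \bB^\top \bQ_W(z),
\end{equation*}
so sandwiching by $\bv_k$ and taking the residue at $z = \xi_k$ reduces the computation to the derivative of the scalar determinantal factor, yielding $\lvert \bv_k^\top \hat{\bv}_k \rvert^2 \xrightarrow{\text{a.s.}} \zeta_k = 1 - \frac{\ell_k + c}{\ell_k (\ell_k + 1)}$ after algebraic simplification.

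The main obstacle is not any single computation but the bookkeeping of the $2K \times 2K$ block matrix $\bB^\top \bQ_W(z) \bA$: one must simultaneously control four kinds of bilinear forms in $\bQ_W$, show that the off-diagonal $K$-blocks are negligible, and verify that Assumptions \ref{ass:class_size}--\ref{ass:multiplicity} are sufficient for the deterministic-equivalent substitutions to survive inversion and contour integration. Since all of these ingredients are established in \cite{benaych-georges_eigenvalues_2011, couillet_two-way_2021}, the statement follows by specialising their framework to $\bX = \bL \bV^\top + \bW$ with Gaussian $\bW$.
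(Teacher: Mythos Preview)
The paper does not actually prove Theorem~\ref{thm:spikes}: it is stated as ``a particular case of Theorem~2 in \cite{couillet_two-way_2021}'' and is only recalled to set up the quantities $\xi_k, \zeta_k$ needed for Theorem~\ref{thm:clt}. Your sketch is precisely the resolvent--Woodbury--Cauchy-contour route that \cite{benaych-georges_eigenvalues_2011,couillet_two-way_2021} follow, so there is nothing to compare beyond noting that you have reproduced the standard argument the paper defers to. One minor quibble: the isotropic deterministic equivalent $\bu^\top \bQ_W(z)\bu' \to m(z)\,\bu^\top\bu'$ holds for \emph{any} deterministic unit vectors and does not require Assumption~\ref{ass:non-sparsity}; that assumption is only needed later, in the proof of Theorem~\ref{thm:clt}, to control $\sqrt{n}\,\lVert \bV^\top \bt\rVert^2$.
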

This result states that $\lambda_k$ leaves the bulk if, and only if, $\ell_k > \sqrt{c}$ (this is a well-known phase transition phenomenon \cite{baik_phase_2005}) and further gives its almost sure asymptotic position $\xi_k$. This is illustrated in Figure \ref{fig:lsd}. Moreover, Theorem \ref{thm:spikes} indicates the almost sure asymptotic alignment $\zeta_k$ of the corresponding eigenvector $\hat{\bv}_k$ with the underlying signal $\bv_k$. This is depicted in Figure \ref{fig:eigvecs} (top row).

\begin{figure*}
\centering
\input{eigvecs}
\caption{Dominant eigenvectors of $\bK = \frac{1}{p} \bX^\top \bX$. \textbf{Top}: Coordinates of $\hat{\bv}_k$ (blue) and the underlying signal $\sqrt{\zeta_k} \bv_k$ (orange) with $\zeta_k$ given in Theorem \ref{thm:spikes}. The dotted orange lines are the $\pm 1 \sigma$-error curves deduced from Theorem \ref{thm:clt}. \textbf{Bottom}: Histogram of the entries of $\hat{\bv}_k - \sqrt{\zeta_k} \bv_k$ (blue) and probability density function of $\calN(0, \frac{1 - \zeta_k}{n})$ (orange). \textbf{Experimental setting}: like in Figure \ref{fig:lsd}.}
\label{fig:eigvecs}
\end{figure*}

It should be noted that the previous results are not restricted to Gaussian noise: up to a control on the moments of the distribution, they can be generalized thanks to an ``interpolation trick'' \cite[Corollary 3.1]{lytova_central_2009}. In addition, a similar spectral behavior is observed with non-i.i.d.\ noise following the realistic assumption that it is \emph{concentrated} \cite{el_karoui_concentration_2009, louart_concentration_2021}.

\subsection{Fluctuations of Spiked Eigenvectors Entries}

The convergence of $\abs{\bv_k^\top \hat{\bv}_k}^2$ to $\zeta_k$ stated in Theorem \ref{thm:spikes} is an important result which justifies the use of the dominant eigenvectors of $\bK$ as estimators of the underlying signal $\bV$. Yet, it is not enough to characterize its reconstruction performance. Indeed, the fluctuations of the entries of $\hat{\bv}_k$ must be known to fully characterize \emph{how} it is aligned with $\bv_k$.

Consider, e.g., the multi-class spectral clustering problem with $\bP = \bM \bJ^\top$. Here, $[\bv_k]_i = J_{i, k} / \sqrt{n_k}$. Hence, $\bx_i$ is classified in the $k$-th class if $[\hat{\bv}_k]_i > [\hat{\bv}_{k'}]_i$ for all $k' \neq k$. The reconstruction performance thus depends on the probability of correct classification $\bbP([\hat{\bv}_k]_i > [\hat{\bv}_{k'}]_i \mid J_{i, k} = 1)$. In the theorem below, we show that the entries of $\hat{\bv}_k$ asymptotically have Gaussian fluctuations around those of $\bv_k$ with variance $(1 - \zeta_k) / n$, as illustrated in the bottom row of Figure \ref{fig:eigvecs}.
\begin{theorem} \label{thm:clt}
For all finite ordered set of indices $\calI = (i_1, \ldots, i_{\abs{\calI}}) \subset [n]$ and $k \in [K]$,
\[
\frac{\sqrt{n} \left[ \hat{\bv}_k - \sqrt{\zeta_k} \bv_k \right]_\calI}{\sqrt{1 - \zeta_k}} \xrightarrow[p, n \to +\infty]{\calD} \calN(\bzero, \bI_{\Abs{\calI}})
\]
with $\hat{\bv}_k$ such that $\bv_k^\top \hat{\bv}_k \geqslant 0$ (otherwise, consider $-\hat{\bv}_k$).
\end{theorem}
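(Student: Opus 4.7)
The plan is to build on the two ideas mentioned in the introduction: a \emph{signal $+$ noise} decomposition of $\hat{\bv}_k$ on $\calV \eqdef \Span(\bv_1, \ldots, \bv_K)$ and its orthogonal complement, and the rotational invariance of the Gaussian noise on $\calV^\perp$. Concretely, write $\hat{\bv}_k = \bvarphi_k + \bpsi_k$ with $\bvarphi_k \eqdef \bV \bV^\top \hat{\bv}_k \in \calV$ and $\bpsi_k \eqdef (\bI_n - \bV \bV^\top) \hat{\bv}_k \in \calV^\perp$; since $\|\hat{\bv}_k\| = 1$, one has $\|\bpsi_k\|^2 = 1 - \|\bV^\top \hat{\bv}_k\|^2$. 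Theorem~\ref{thm:spikes} gives $(\bv_k^\top \hat{\bv}_k)^2 \to \zeta_k$ a.s., and a standard companion result available under Assumption~\ref{ass:multiplicity} ensures $\bv_{k'}^\top \hat{\bv}_k \to 0$ a.s.\ for $k' \neq k$, whence $\|\bpsi_k\|^2 \to 1 - \zeta_k$ a.s.

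For the noise part, I invoke the rotational invariance. For any orthogonal $\bO \in \bbR^{n \times n}$ that fixes $\bv_1, \ldots, \bv_K$ pointwise, $\bP \bO^\top = \bL (\bO \bV)^\top = \bL \bV^\top = \bP$ and $\bW \bO^\top \overset{\calD}{=} \bW$; hence $\bX \bO^\top \overset{\calD}{=} \bX$, $\bO \bK \bO^\top \overset{\calD}{=} \bK$, and finally $\bO \hat{\bv}_k \overset{\calD}{=} \hat{\bv}_k$ (the sign convention $\bv_k^\top \hat{\bv}_k \geqslant 0$ is preserved because $\bO$ fixes $\bv_k$). Since $\bO$ acts as the identity on $\calV$, $(\bvarphi_k, \bpsi_k) \overset{\calD}{=} (\bvarphi_k, \bO \bpsi_k)$, so conditionally on $\bvarphi_k$, $\bpsi_k / \|\bpsi_k\|$ is uniformly distributed on the unit sphere of $\calV^\perp$. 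Representing such a uniform vector as $(\bI_n - \bV \bV^\top) \bg / \|(\bI_n - \bV \bV^\top) \bg\|$ with $\bg \sim \calN(\bzero, \bI_n)$, one has $\|(\bI_n - \bV \bV^\top) \bg\|^2 / n \to 1$ a.s., and for each $i \in \calI$, $[(\bI_n - \bV \bV^\top) \bg]_i = g_i - \sum_{k'} V_{i, k'} (\bv_{k'}^\top \bg) = g_i + o(n^{-1/4})$ in probability by Assumption~\ref{ass:non-sparsity}. Slutsky's lemma combined with $\|\bpsi_k\| \to \sqrt{1 - \zeta_k}$ then yields $\sqrt{n} [\bpsi_k]_\calI \xrightarrow{\calD} \calN(\bzero, (1 - \zeta_k) \bI_{|\calI|})$.

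The remaining step, which I expect to be the main obstacle, is showing that the signal correction $\sqrt{n} [\bvarphi_k - \sqrt{\zeta_k} \bv_k]_\calI$ vanishes in probability. Writing $\bvarphi_k - \sqrt{\zeta_k} \bv_k = \bV (\bV^\top \hat{\bv}_k - \sqrt{\zeta_k} \be_k)$, one bounds, for each $i \in \calI$, $|[\bvarphi_k - \sqrt{\zeta_k} \bv_k]_i| \leqslant \sqrt{K} \max_{k'} |V_{i, k'}| \cdot \|\bV^\top \hat{\bv}_k - \sqrt{\zeta_k} \be_k\|$. Assumption~\ref{ass:non-sparsity} yields $\max_{i, k'} |V_{i, k'}| = o(n^{-1/4})$, while the CLT-level second-order behavior of spiked eigenvectors provides $\|\bV^\top \hat{\bv}_k - \sqrt{\zeta_k} \be_k\| = \O(n^{-1/2})$ in probability; therefore $\sqrt{n}$ times the displayed entry is $o(n^{-1/4})$ in probability, and a final Slutsky step delivers the stated Gaussian limit. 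The delicate point is precisely this quantitative matching: the non-sparsity of $\bV$ must be strong enough, and the alignment rate fast enough, so that the $\sqrt{n}$ rescaling does not amplify the residual signal contribution---one must invoke finer second-order information beyond the a.s.\ convergence stated in Theorem~\ref{thm:spikes}.
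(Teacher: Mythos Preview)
Your proposal is correct and follows the same two-step architecture as the paper: the tangent--normal decomposition $\hat{\bv}_k = \bV\bV^\top\hat{\bv}_k + (\bI_n-\bV\bV^\top)\hat{\bv}_k$, the rotational-invariance argument showing that the normalized orthogonal component is uniform on the unit sphere of $\calV^\perp$, and then a sphere-to-Gaussian limit for finitely many coordinates. Two points of comparison are worth flagging. First, for the sphere-to-Gaussian step you use the elementary Gaussian representation $(\bI_n-\bV\bV^\top)\bg/\|(\bI_n-\bV\bV^\top)\bg\|$ together with Slutsky, whereas the paper invokes Schoenberg's characteristic-function theorem (their Theorem~\ref{thm:characteristic}); both exploit Assumption~\ref{ass:non-sparsity} in the same way to kill the $\bV$-correction, and your route is arguably the more self-contained one. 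Second, for the signal residual $\sqrt{n}[\bV(\bV^\top\hat{\bv}_k-\sqrt{\zeta_k}\be_k)]_\calI$, you correctly identify that Assumption~\ref{ass:non-sparsity} alone only gives $|V_{i,\kappa}|=o(n^{-1/4})$, so one still needs $\|\bV^\top\hat{\bv}_k-\sqrt{\zeta_k}\be_k\|=o_P(n^{-1/4})$; you import the $\O_P(n^{-1/2})$ fluctuation rate from the second-order spike literature, which is legitimate. The paper is terser here: it simply writes $\hat{\bv}_k=\sqrt{\zeta_k}\bv_k+\sqrt{1-\zeta_k}\hat{\bv}_k^\sharp+\bvarepsilon$ with $\|\bvarepsilon\|\to 0$ a.s.\ and concludes, implicitly relying on the same quantitative input you make explicit. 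In short, your argument matches the paper's in spirit and is, if anything, more transparent about where external rate information enters.
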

This result invokes the quantity $\zeta_k$ introduced in Theorem \ref{thm:spikes}, which quantifies the alignment of $\hat{\bv}_k$ with $\bv_k$. Theorem \ref{thm:clt} specifies that $[\hat{\bv}_k]_\calI$ behaves like $\calN(\sqrt{\zeta_k} [\bv_k]_\calI, \frac{1 - \zeta_k}{n} \bI_{\abs{\calI}})$ in the large-dimensional regime. That is, the more $\hat{\bv}_k$ is aligned with $\bv_k$ (i.e., the closer $\zeta_k$ is to $1$), the more it concentrates around $\sqrt{\zeta_k} \bv_k$, since the variance is $(1 - \zeta_k) / n$. Furthermore, the entries of $[\hat{\bv}_k]_\calI$ are \emph{asymptotically independent} for any \emph{finite} ordered set of indices $\calI$. In the multi-class spectral clustering problem considered above, since $\hat{\bv}_k$ and $\hat{\bv}_{k'}$ are asymptotically independent if $k' \neq k$ \cite[Theorem 4]{couillet_fluctuations_2013}, Theorem \ref{thm:clt} yields
\[
\bbP([\hat{\bv}_k]_i > [\hat{\bv}_{k'}]_i \mid J_{i, k} = 1) \asymp \Phi \left( \sqrt{\frac{n}{n_k} \frac{\zeta_k}{2 - (\zeta_k + \zeta_{k'})}} \right)
\]
where $\Phi : x \mapsto \frac{1}{\sqrt{2 \pi}} \int_{-\infty}^x e^{-\frac{t^2}{2}} \rmd t$ is the Gaussian cumulative distribution function.

We prove Theorem \ref{thm:clt} in Section \ref{sec:proof} below. The proof hinges on the rotational invariance of the noise (Lemma \ref{lem:rotinv}). In fact, it does not need the entries of $\bW$ to be distributed according to the Gaussian law, but only that its distribution be invariant under isometries. This makes it a very general proof, which can easily be adapted to most standard spiked models as those discussed, e.g., in \cite[\S 2.5.4]{couillet_random_2022}.

\section{Proof of Main Result} \label{sec:proof}

Consider the tangent-normal decomposition
\begin{equation} \label{eq:decomp}
\hat{\bv}_k = \sum_{\kappa = 1}^K \tau_\kappa \bv_\kappa + \sqrt{1 - \Norm{\btau}^2} \, \hat{\bv}_k^\sharp
\end{equation}
where $\hat{\bv}_k^\sharp = (\bI_n - \bV \bV^\top) \frac{\hat{\bv}_k}{\sqrt{1 - \norm{\btau}^2}}$ is a unit-norm vector orthogonal to the span of $\bV$ and $\btau = \begin{bmatrix} \tau_1 & \ldots & \tau_K \end{bmatrix}^\top$ with $\tau_\kappa = \bv_\kappa^\top \hat{\bv}_k$ measuring the cosine between $\bv_\kappa$ and $\hat{\bv}_k$. Let $\bO$ be an $n \times n$ orthogonal matrix such that $\bO \bV = \bV$ --- i.e., a rotational symmetry about the span of $\bV$ --- and $\tilde{\bK} \eqdef \bO \bK \bO^\top$. Then, since $\bK = \frac{1}{p} \bX^\top \bX$ and $\bX = \bL \bV^\top + \bW$,
\begin{multline*}
\tilde{\bK} = \frac{1}{p} \left( \left[ \bO \bV \right] \bL^\top \bL \left[ \bO \bV \right]^\top + \left[ \bO \bV \right] \bL^\top \left[ \bW \bO^\top \right] \right. \\
\left. + \left[ \bW \bO^\top \right]^\top \bL \left[ \bO \bV \right]^\top + \left[ \bW \bO^\top \right]^\top \left[ \bW \bO^\top \right] \right).
\end{multline*}

\begin{lemma} \label{lem:rotinv}
$\bW$ and $\bW \bO^\top$ are identically distributed.
\end{lemma}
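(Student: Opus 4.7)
The plan is to reduce the claim to the rotational invariance of an isotropic Gaussian vector, applied row by row. Writing $\bW^\top = \begin{bmatrix} \bw_1 & \ldots & \bw_p \end{bmatrix}$ with $\bw_i \in \bbR^n$ denoting the $i$-th row of $\bW$ viewed as a column vector, each $\bw_i$ is independent and $\calN(\bzero, \bI_n)$-distributed (since the entries of $\bW$ are i.i.d.\ $\calN(0,1)$). The $i$-th row of $\bW \bO^\top$ then reads $\bw_i^\top \bO^\top$, whose transpose is $\bO \bw_i$. Proving the lemma therefore amounts to showing that the two families $(\bO \bw_i)_{i \in [p]}$ and $(\bw_i)_{i \in [p]}$ have the same joint law.

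The key step is the marginal identity: for each fixed $i$, orthogonality of $\bO$ gives $\bO \bw_i \sim \calN(\bzero,\, \bO \bI_n \bO^\top) = \calN(\bzero, \bI_n)$, so $\bO \bw_i \overset{\calD}{=} \bw_i$. Joint independence is preserved because right-multiplication by $\bO^\top$ acts separately on each row of $\bW$; hence $\bO \bw_1, \ldots, \bO \bw_p$ are images of independent variables under the same deterministic map, and are themselves independent. Combining, $\bW \bO^\top$ has i.i.d.\ $\calN(\bzero, \bI_n)$ rows, which is exactly the joint law of $\bW$.

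I do not anticipate a real technical obstacle in the Gaussian setting, but one point is worth flagging: $\bO$ is not arbitrary but is constructed from $\bV$ so as to satisfy $\bO \bV = \bV$. This property is not used in the identification above, and $\bV$ is treated as deterministic (or, in the clustering reduction, independent of $\bW$), so $\bO$ can be regarded as deterministic for the purpose of this computation; if desired the argument goes through verbatim after conditioning on $\bO$. Finally, exactly the same sketch remains valid whenever the rows of $\bW$ are rotationally invariant in $\bbR^n$ — not only Gaussian — which is precisely the generality invoked by the authors when they remark that only rotational invariance of the noise is used in the sequel.
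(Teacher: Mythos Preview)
Your proof is correct and essentially the same as the paper's: both verify that $\bW\bO^\top$ has i.i.d.\ $\calN(0,1)$ entries by computing means and covariances, the only cosmetic difference being that you organize the computation row-by-row (using $\bO\bw_i \sim \calN(\bzero,\bO\bI_n\bO^\top)=\calN(\bzero,\bI_n)$) while the paper works entry-by-entry. Your remark that only rotational invariance of the rows is really needed, and that $\bO$ may be treated as deterministic, is a useful clarification that the paper leaves implicit.
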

\begin{proof}
The distribution of $[\bW \bO^\top]_{i, j} = \sum_{k = 1}^n W_{i, k} O_{j, k}$ is $\calN(0, 1)$ and $\Cov([\bW \bO^\top]_{i, j}, [\bW \bO^\top]_{i', j'})$ is $1$ if $(i, j) = (i', j')$ and $0$ otherwise. Hence $[\bW \bO^\top]_{i, j} \simiid \calN(0, 1)$.
\end{proof}

According to the previous lemma, $\tilde{\bK}$ follows the same model as $\bK$ since $\bO \bV = \bV$. Therefore, its $k$-th dominant eigenvector can likewise be decomposed as
\[
\tilde{\bv}_k = \sum_{\kappa = 1}^K \tilde{\tau}_\kappa \bv_\kappa + \sqrt{1 - \Norm{\tilde{\btau}}^2} \, \tilde{\bv}_k^\sharp
\]
with $\tilde{\tau}_\kappa = \bv_\kappa^\top \tilde{\bv}_k$ and $\tilde{\bv}_k^\sharp = (\bI_n - \bV \bV^\top) \frac{\tilde{\bv}_k}{\sqrt{1 - \norm{\tilde{\btau}}^2}}$ \emph{identically distributed to $\hat{\bv}_k^\sharp$}. Yet, $\tilde{\bv}_k = \bO \hat{\bv}_k$. Thus, $\hat{\bv}_k^\sharp$ and $\bO \hat{\bv}_k^\sharp$ are identically distributed for all $n \times n$ orthogonal matrix $\bO$ such that $\bO \bV = \bV$. Consequently, denoting $\eta$ the probability distribution of $\hat{\bv}_k^\sharp$ and $\bV^\perp = \set{\bw \in \bbR^n \mid \bV^\top \bw = \bzero}$, then, for all $\bx, \by \in \bbS^{n - 1} \cap \bV^\perp$, we have $\rmd \eta(\bx) = \rmd \eta(\bO \bx) = \rmd \eta(\by)$ with $\bO = \bI_n - \frac{(\bx - \by)(\bx - \by)^\top}{1 - \bx^\top \by}$ satisfying $\bO \bV = \bV$. This shows that $\hat{\bv}_k^\sharp$ is uniformly distributed on $\bbS^{n - 1} \cap \bV^\perp$.

Then, $\hat{\bv}_k^\sharp$ can be written as $\bU \bu$ where $\bu$ is uniformly distributed on $\bbS^{n - 1 - K} \subset \bbR^{n - K}$ and $\bU \in \bbR^{n \times (n - K)}$ is such that $\bU^\top \bU = \bI_{n - K}$ and $\bU^\top \bV = \bzero$ (the columns of $\bU$ form an orthonormal basis of $\bV^\perp$ in $\bbR^n$). We use the following theorem to identify the asymptotic distribution of $\sqrt{n} [\hat{\bv}_k^\sharp]_\calI$.

\begin{theorem}[\cite{schoenberg_metric_1938, steerneman_spherical_2005}] \label{thm:characteristic}
The characteristic function of a vector $\bw$ uniformly distributed on $\bbS^{n - 1}$ is given by $\varphi_\bw(\bt) \eqdef \esp{e^{\rmi \bt^\top \bw}} = \Omega_n(\norm{\bt})$ where $\Omega_n$ is such that $r \mapsto \Omega_n(r \sqrt{n})$ converges uniformly in $r \geqslant 0$ to $r \mapsto e^{-\frac{r^2}{2}}$ as $n \to +\infty$.
\end{theorem}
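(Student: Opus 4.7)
The plan is to compute the characteristic function of $\bz_n \eqdef \sqrt{n}[\hat{\bv}_k - \sqrt{\zeta_k}\bv_k]_\calI/\sqrt{1-\zeta_k}$ from the decomposition \eqref{eq:decomp} via Theorem \ref{thm:characteristic}, and then invoke L\'evy's continuity theorem. Substituting \eqref{eq:decomp} splits $\bz_n$ into a \emph{signal-residual} part
\[
\brho_n \eqdef \sqrt{n}\bigl[(\tau_k-\sqrt{\zeta_k})\bv_k + \textstyle\sum_{\kappa\neq k}\tau_\kappa\bv_\kappa\bigr]_\calI\big/\sqrt{1-\zeta_k}
\]
and a \emph{noise} part $\sqrt{n(1-\|\btau\|^2)/(1-\zeta_k)}\,[\bU\bu]_\calI$. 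Since $\btau=\bV^\top\hat{\bv}_k$ is invariant under every rotation $\bO$ with $\bO\bV=\bV$, the rotational-invariance argument from the excerpt applied jointly to $(\btau,\hat{\bv}_k^\sharp)$ shows that the conditional law of $\bu$ given $\btau$ remains uniform on $\bbS^{n-1-K}$.

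Fix $\bt\in\bbR^{|\calI|}$ and condition on $\btau$. Theorem \ref{thm:characteristic} applied to $\bu$ with the linear form $[\bU]_\calI^\top\bt$ then gives
\begin{equation*}
\esp{e^{\rmi\bt^\top\bz_n}} = \Esp{e^{\rmi\bt^\top\brho_n}\,\Omega_{n-K}\!\Bigl(\tfrac{\sqrt{n(1-\|\btau\|^2)}}{\sqrt{1-\zeta_k}}\|[\bU]_\calI^\top\bt\|\Bigr)}.
\end{equation*}
The identity $\bU\bU^\top=\bI_n-\bV\bV^\top$ yields $\|[\bU]_\calI^\top\bt\|^2=\|\bt\|^2-\bt^\top[\bV\bV^\top]_{\calI,\calI}\bt$, and Assumption \ref{ass:non-sparsity} (which forces $|V_{i,\kappa}|=o(n^{-1/4})$ uniformly) makes $[\bV\bV^\top]_{\calI,\calI}\to\bzero$, so $\|[\bU]_\calI^\top\bt\|\to\|\bt\|$. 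Theorem \ref{thm:spikes} together with Assumption \ref{ass:multiplicity} gives $\tau_k^2\to\zeta_k$ and $\tau_\kappa^2\to 0$ for $\kappa\neq k$ almost surely, hence $(1-\|\btau\|^2)/(1-\zeta_k)\to 1$ a.s. The argument of $\Omega_{n-K}$ is thus $\sqrt{n-K}\,r_n$ with $r_n\to\|\bt\|$ a.s., and the uniform convergence stated in Theorem \ref{thm:characteristic} yields $\Omega_{n-K}(\sqrt{n-K}\,r_n)\to e^{-\|\bt\|^2/2}$ a.s.

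All that is left is $\brho_n\to\bzero$ in probability; bounded convergence then gives $\esp{e^{\rmi\bt^\top\bz_n}}\to e^{-\|\bt\|^2/2}$, and L\'evy's continuity theorem concludes. This final step is the principal technical obstacle: Assumption \ref{ass:non-sparsity} alone only guarantees $\sqrt{n}\,|V_{i,\kappa}|=o(n^{1/4})$, so a quantitative rate on $\tau_k-\sqrt{\zeta_k}$ and on $\tau_\kappa$ ($\kappa\neq k$) is required. In the canonical regime $|V_{i,k}|=\Theta(n^{-1/2})$ (e.g.\ the balanced clustering case $\bV=\bJ\bD^{-1/2}$), $\sqrt{n}\,[\bv_\kappa]_\calI$ is bounded and Slutsky applied to Theorem \ref{thm:spikes} suffices; more generally, the standard $O_p(n^{-1/2})$ fluctuation rate of spike alignments under rotationally invariant noise secures $\brho_n=o_p(1)$. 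The rest of the argument is essentially formal: rotational invariance does all the work by reducing the characteristic function to a spherical one.
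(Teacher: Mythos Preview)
Your proposal does not prove the stated theorem at all. Theorem~\ref{thm:characteristic} is a classical result (Schoenberg, Steerneman) about the characteristic function of the uniform distribution on $\bbS^{n-1}$; the paper does not prove it but merely cites it. What you have written is instead a proof of Theorem~\ref{thm:clt} that \emph{uses} Theorem~\ref{thm:characteristic} as a black box. If you were actually asked to prove Theorem~\ref{thm:characteristic}, you would need to (i) show the characteristic function depends only on $\|\bt\|$ by rotational invariance, (ii) identify $\Omega_n$ explicitly (it is $\Gamma(n/2)(2/\|\bt\|)^{n/2-1}J_{n/2-1}(\|\bt\|)$, a normalized Bessel function), and (iii) prove the uniform convergence of $\Omega_n(r\sqrt{n})$ to $e^{-r^2/2}$ via Bessel asymptotics or a direct moment/Laplace argument. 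None of that appears in your write-up.

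If, as seems likely, the intended target was Theorem~\ref{thm:clt}, then your approach is essentially the same as the paper's: both use the tangent--normal decomposition~\eqref{eq:decomp}, establish that $\hat{\bv}_k^\sharp=\bU\bu$ with $\bu$ uniform on $\bbS^{n-1-K}$ by rotational invariance, apply Theorem~\ref{thm:characteristic}, and control $\|[\bU]_\calI^\top\bt\|$ via Assumption~\ref{ass:non-sparsity}. The organizational difference is that the paper first proves $\sqrt{n}[\hat{\bv}_k^\sharp]_\calI\xrightarrow{\calD}\calN(\bzero,\bI_{|\calI|})$ unconditionally and only afterwards absorbs the residual into an error vector $\bvarepsilon$ with $\|\bvarepsilon\|\to 0$ a.s., whereas you condition on $\btau$ from the start and carry the random scaling $\sqrt{1-\|\btau\|^2}$ through the characteristic-function computation. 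Your route is slightly heavier but equivalent.

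You are right to flag $\brho_n\to 0$ as the delicate point: Assumption~\ref{ass:non-sparsity} only gives $\sqrt{n}\,|V_{i,\kappa}|=o(n^{1/4})$, so one needs a rate on $\tau_k-\sqrt{\zeta_k}$ and $\tau_\kappa$ ($\kappa\neq k$) beyond the bare a.s.\ convergence of Theorem~\ref{thm:spikes}. The paper's proof glosses over exactly this issue --- its final line asserts $\|\bvarepsilon\|\xrightarrow{\text{a.s.}}0$, which does not by itself control $\sqrt{n}[\bvarepsilon]_\calI$. So your identification of the gap is accurate, and your sketch of how to close it (either the clustering scaling $V_{i,\kappa}=\Theta(n^{-1/2})$ or the standard $O_p(n^{-1/2})$ fluctuation rate for spike alignments) is more careful than the paper on this point.
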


Let $\bt \in \bbR^n$ be such that $t_i = 0$ if $i \not\in \calI$. The characteristic function of $\sqrt{n} [\hat{\bv}_k^\sharp]_\calI$ is
\begin{align*}
\varphi_{\sqrt{n} [\hat{\bv}_k^\sharp]_\calI}(t_{i_1}, \ldots, t_{i_{\Abs{\calI}}}) &= \Esp{e^{\rmi \sqrt{n} \bt^\top \bU \bu}} \\
&= \Omega_{n - K}(\sqrt{n} \Norm{\bU^\top \bt})
\end{align*}
and $\norm{\bU^\top \bt} = \sqrt{\norm{\bt}^2 - \norm{\bV^\top \bt}^2} = \norm{\bt} + \O(\norm{\bV^\top \bt}^2)$. According to Assumption \ref{ass:non-sparsity}, $\sqrt{n} \norm{\bV^\top \bt}^2 \to 0$ as $p, n \to +\infty$, thus $\Omega_{n - K}(\sqrt{n} \norm{\bU^\top \bt}) = \Omega_{n - K}(\sqrt{n - K} \norm{\bt} + \epsilon_n)$ with $\epsilon_n \to 0$ as $p, n \to +\infty$ and
\begin{multline*}
\Abs{\Omega_{n - K}(\sqrt{n - K} \Norm{\bt} + \epsilon_n) - e^{-\frac{1}{2} \Norm{\bt}^2}} \leqslant \\
\Abs{\Omega_{n - K}(\sqrt{n - K} \Norm{\bt} + \epsilon_n) - e^{-\frac{1}{2} \left[ \Norm{\bt} + \epsilon_n / \sqrt{n - K} \right]^2}} \\
+ \Abs{e^{-\frac{1}{2} \left[ \Norm{\bt} + \epsilon_n / \sqrt{n - K} \right]^2} - e^{-\frac{1}{2} \Norm{\bt}^2}}.
\end{multline*}
As $p, n \to +\infty$, the first term vanishes from the uniform convergence given in Theorem \ref{thm:characteristic} and the second term vanishes by continuity. Therefore, $\varphi_{\sqrt{n} [\hat{\bv}_k^\sharp]_\calI}(t_{i_1}, \ldots, t_{i_{\abs{\calI}}}) \to e^{-\frac{\norm{\bt}^2}{2}}$ and, by Lévy's continuity theorem \cite{billingsley_probability_2012}, we can conclude that $\sqrt{n} [\hat{\bv}_k^\sharp]_\calI \xrightarrow{\calD} \calN(\bzero, \bI_{\abs{\calI}})$ as $p, n \to +\infty$. And, finally, given decomposition \eqref{eq:decomp}, Theorem \ref{thm:spikes} and the independence of $\hat{\bv}_k$ and $\hat{\bv}_{k'}$ if $k' \neq k$ \cite[Theorem 4]{couillet_fluctuations_2013},
\[
\hat{\bv}_k = \sqrt{\zeta_k} \bv_k + \sqrt{1 - \zeta_k} \hat{\bv}_k^\sharp + \bvarepsilon \quad \text{with} \quad \Norm{\bvarepsilon} \xrightarrow[p, n \to +\infty]{\text{a.s.}} 0.
\]
This concludes the proof of Theorem \ref{thm:clt}.

\section{Numerical Experiments} \label{sec:experiments}

To illustrate this result, we conduct a first experiment on synthetic data following model \eqref{eq:model} with $K = 3$ classes of equal size and $(\norm{\bmu_1}, \norm{\bmu_2}, \norm{\bmu_3}) = (3, 4, 5)$. The $\bx_i$'s are ordered by class. The spectral distribution of $\bK$ is plotted in Figure \ref{fig:lsd} and Figure \ref{fig:eigvecs} shows the dominant eigenvectors with the histograms of residuals $\hat{\bv}_k - \sqrt{\zeta_k} \bv_k$. We observe a very good fit of the latter to the probability density function of $\calN(0, \frac{1 - \zeta_k}{n})$ --- the $\hat{\bv}_k$'s exactly correspond to a deterministic signal $\sqrt{\zeta_k} \bv_k$ corrupted by additive centered Gaussian noise. The \textit{signal $+$ noise} structure of model \eqref{eq:model} has been transferred to the spectral estimator of $\bV$.

Then, we conduct a second experiment on the Fashion-MNIST dataset \cite{xiao_fashion-mnist_2017} consisting of $28 \times 28$ images of clothes separated in $10$ classes of size $7\,000$ each. We select two classes $k_1, k_2$ and perform binary spectral clustering using the dominant eigenvector of $\bK = \frac{1}{p} \bX^\top \bX$ where the columns of $\bX$ are the $784$ pixels of the images from classes $k_1$ and $k_2$. The dimension of $\bX$ is thus $784 \times 14\,000$. Here, we assume a similar model as our toy example in the introduction: $\bX = \bmu \bj^\top + \bW$ where $j_i = \pm1$ depending on the class of the $i$-th image. Thus, according to Theorem \ref{thm:clt}, the $i$-th entry of the dominant eigenvector $\hat{\bv}$ asymptotically follows $\calN(\sqrt{\zeta} \frac{j_i}{\sqrt{n}}, \frac{1 - \zeta}{n})$ and, given $\bj$, $\zeta$ can be estimated as $(\sum_{i = 1}^n \frac{j_i}{\sqrt{n}} \hat{v}_i)^2$. We can then compare the observed accuracy $\frac{1}{n} \sum_{i = 1}^n \bone_{\hat{v}_i j_i > 0}$ to the one expected from Theorem \ref{thm:clt}, $\bbP(\hat{v}_i j_i > 0) \asymp \Phi(\sqrt{\frac{\zeta}{1 - \zeta}})$. The results are presented in Figure \ref{fig:classification_accuracy} for each pair of classes $k_1, k_2$.

\begin{figure}
\centering
\begin{tikzpicture}

\definecolor{darkgray176}{RGB}{176,176,176}

\begin{axis}[
name=ax,
width=.8\linewidth,
height=.8\linewidth,
axis on top,
tick align=outside,
xmin=-0.5, xmax=9.5,
xtick pos=left,
xtick={0,1,2,3,4,5,6,7,8,9},
xticklabel style={rotate=45, anchor=east, font=\scriptsize},
xticklabels={T-shirt/top,Trouser,Pullover,Dress,Coat,Sandal,Shirt,Sneaker,Bag,Ankle boot},
y dir=reverse,
ymin=-0.5, ymax=9.5,
ytick pos=right,
ytick={0,1,2,3,4,5,6,7,8,9},
yticklabel style={anchor=west, font=\scriptsize},
yticklabels={T-shirt/top,Trouser,Pullover,Dress,Coat,Sandal,Shirt,Sneaker,Bag,Ankle boot}
]
\addplot graphics [includegraphics cmd=\pgfimage,xmin=-0.5, xmax=9.5, ymin=9.5, ymax=-0.5] {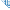};
\addplot graphics [includegraphics cmd=\pgfimage,xmin=-0.5, xmax=9.5, ymin=9.5, ymax=-0.5] {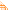};
\draw (axis cs:0,1) node[
  scale=0.8,
  text=black,
  rotate=0.0
]{0.61};
\draw (axis cs:1,0) node[
  scale=0.8,
  text=black,
  rotate=0.0
]{0.64};
\draw (axis cs:0,2) node[
  scale=0.8,
  text=black,
  rotate=0.0
]{0.55};
\draw (axis cs:2,0) node[
  scale=0.8,
  text=black,
  rotate=0.0
]{0.54};
\draw (axis cs:0,3) node[
  scale=0.8,
  text=black,
  rotate=0.0
]{0.56};
\draw (axis cs:3,0) node[
  scale=0.8,
  text=black,
  rotate=0.0
]{0.57};
\draw (axis cs:0,4) node[
  scale=0.8,
  text=black,
  rotate=0.0
]{0.61};
\draw (axis cs:4,0) node[
  scale=0.8,
  text=black,
  rotate=0.0
]{0.61};
\draw (axis cs:0,5) node[
  scale=0.8,
  text=black,
  rotate=0.0
]{0.9};
\draw (axis cs:5,0) node[
  scale=0.8,
  text=black,
  rotate=0.0
]{0.89};
\draw (axis cs:0,6) node[
  scale=0.8,
  text=black,
  rotate=0.0
]{0.52};
\draw (axis cs:6,0) node[
  scale=0.8,
  text=black,
  rotate=0.0
]{0.53};
\draw (axis cs:0,7) node[
  scale=0.8,
  text=black,
  rotate=0.0
]{0.83};
\draw (axis cs:7,0) node[
  scale=0.8,
  text=black,
  rotate=0.0
]{0.85};
\draw (axis cs:0,8) node[
  scale=0.8,
  text=black,
  rotate=0.0
]{0.5};
\draw (axis cs:8,0) node[
  scale=0.8,
  text=black,
  rotate=0.0
]{0.48};
\draw (axis cs:0,9) node[
  scale=0.8,
  text=black,
  rotate=0.0
]{0.52};
\draw (axis cs:9,0) node[
  scale=0.8,
  text=black,
  rotate=0.0
]{0.54};
\draw (axis cs:1,2) node[
  scale=0.8,
  text=black,
  rotate=0.0
]{0.69};
\draw (axis cs:2,1) node[
  scale=0.8,
  text=black,
  rotate=0.0
]{0.72};
\draw (axis cs:1,3) node[
  scale=0.8,
  text=black,
  rotate=0.0
]{0.54};
\draw (axis cs:3,1) node[
  scale=0.8,
  text=black,
  rotate=0.0
]{0.54};
\draw (axis cs:1,4) node[
  scale=0.8,
  text=black,
  rotate=0.0
]{0.76};
\draw (axis cs:4,1) node[
  scale=0.8,
  text=black,
  rotate=0.0
]{0.81};
\draw (axis cs:1,5) node[
  scale=0.8,
  text=black,
  rotate=0.0
]{0.97};
\draw (axis cs:5,1) node[
  scale=0.8,
  text=black,
  rotate=0.0
]{0.95};
\draw (axis cs:1,6) node[
  scale=0.8,
  text=black,
  rotate=0.0
]{0.58};
\draw (axis cs:6,1) node[
  scale=0.8,
  text=black,
  rotate=0.0
]{0.58};
\draw (axis cs:1,7) node[
  scale=0.8,
  text=black,
  rotate=0.0
]{0.86};
\draw (axis cs:7,1) node[
  scale=0.8,
  text=black,
  rotate=0.0
]{0.86};
\draw (axis cs:1,8) node[
  scale=0.8,
  text=black,
  rotate=0.0
]{0.67};
\draw (axis cs:8,1) node[
  scale=0.8,
  text=black,
  rotate=0.0
]{0.69};
\draw (axis cs:1,9) node[
  scale=0.8,
  text=black,
  rotate=0.0
]{0.71};
\draw (axis cs:9,1) node[
  scale=0.8,
  text=black,
  rotate=0.0
]{0.71};
\draw (axis cs:2,3) node[
  scale=0.8,
  text=black,
  rotate=0.0
]{0.63};
\draw (axis cs:3,2) node[
  scale=0.8,
  text=black,
  rotate=0.0
]{0.63};
\draw (axis cs:2,4) node[
  scale=0.8,
  text=black,
  rotate=0.0
]{0.54};
\draw (axis cs:4,2) node[
  scale=0.8,
  text=black,
  rotate=0.0
]{0.55};
\draw (axis cs:2,5) node[
  scale=0.8,
  text=black,
  rotate=0.0
]{0.9};
\draw (axis cs:5,2) node[
  scale=0.8,
  text=black,
  rotate=0.0
]{0.88};
\draw (axis cs:2,6) node[
  scale=0.8,
  text=black,
  rotate=0.0
]{0.57};
\draw (axis cs:6,2) node[
  scale=0.8,
  text=black,
  rotate=0.0
]{0.57};
\draw (axis cs:2,7) node[
  scale=0.8,
  text=black,
  rotate=0.0
]{0.84};
\draw (axis cs:7,2) node[
  scale=0.8,
  text=black,
  rotate=0.0
]{0.85};
\draw (axis cs:2,8) node[
  scale=0.8,
  text=black,
  rotate=0.0
]{0.56};
\draw (axis cs:8,2) node[
  scale=0.8,
  text=black,
  rotate=0.0
]{0.56};
\draw (axis cs:2,9) node[
  scale=0.8,
  text=black,
  rotate=0.0
]{0.59};
\draw (axis cs:9,2) node[
  scale=0.8,
  text=black,
  rotate=0.0
]{0.61};
\draw (axis cs:3,4) node[
  scale=0.8,
  text=black,
  rotate=0.0
]{0.69};
\draw (axis cs:4,3) node[
  scale=0.8,
  text=black,
  rotate=0.0
]{0.7};
\draw (axis cs:3,5) node[
  scale=0.8,
  text=black,
  rotate=0.0
]{0.92};
\draw (axis cs:5,3) node[
  scale=0.8,
  text=black,
  rotate=0.0
]{0.9};
\draw (axis cs:3,6) node[
  scale=0.8,
  text=black,
  rotate=0.0
]{0.54};
\draw (axis cs:6,3) node[
  scale=0.8,
  text=black,
  rotate=0.0
]{0.53};
\draw (axis cs:3,7) node[
  scale=0.8,
  text=black,
  rotate=0.0
]{0.81};
\draw (axis cs:7,3) node[
  scale=0.8,
  text=black,
  rotate=0.0
]{0.82};
\draw (axis cs:3,8) node[
  scale=0.8,
  text=black,
  rotate=0.0
]{0.59};
\draw (axis cs:8,3) node[
  scale=0.8,
  text=black,
  rotate=0.0
]{0.59};
\draw (axis cs:3,9) node[
  scale=0.8,
  text=black,
  rotate=0.0
]{0.59};
\draw (axis cs:9,3) node[
  scale=0.8,
  text=black,
  rotate=0.0
]{0.58};
\draw (axis cs:4,5) node[
  scale=0.8,
  text=black,
  rotate=0.0
]{0.95};
\draw (axis cs:5,4) node[
  scale=0.8,
  text=black,
  rotate=0.0
]{0.93};
\draw (axis cs:4,6) node[
  scale=0.8,
  text=black,
  rotate=0.0
]{0.62};
\draw (axis cs:6,4) node[
  scale=0.8,
  text=black,
  rotate=0.0
]{0.63};
\draw (axis cs:4,7) node[
  scale=0.8,
  text=black,
  rotate=0.0
]{0.91};
\draw (axis cs:7,4) node[
  scale=0.8,
  text=black,
  rotate=0.0
]{0.91};
\draw (axis cs:4,8) node[
  scale=0.8,
  text=black,
  rotate=0.0
]{0.61};
\draw (axis cs:8,4) node[
  scale=0.8,
  text=black,
  rotate=0.0
]{0.64};
\draw (axis cs:4,9) node[
  scale=0.8,
  text=black,
  rotate=0.0
]{0.66};
\draw (axis cs:9,4) node[
  scale=0.8,
  text=black,
  rotate=0.0
]{0.7};
\draw (axis cs:5,6) node[
  scale=0.8,
  text=black,
  rotate=0.0
]{0.85};
\draw (axis cs:6,5) node[
  scale=0.8,
  text=black,
  rotate=0.0
]{0.85};
\draw (axis cs:5,7) node[
  scale=0.8,
  text=black,
  rotate=0.0
]{0.76};
\draw (axis cs:7,5) node[
  scale=0.8,
  text=black,
  rotate=0.0
]{0.76};
\draw (axis cs:5,8) node[
  scale=0.8,
  text=black,
  rotate=0.0
]{0.89};
\draw (axis cs:8,5) node[
  scale=0.8,
  text=black,
  rotate=0.0
]{0.88};
\draw (axis cs:5,9) node[
  scale=0.8,
  text=black,
  rotate=0.0
]{0.93};
\draw (axis cs:9,5) node[
  scale=0.8,
  text=black,
  rotate=0.0
]{0.92};
\draw (axis cs:6,7) node[
  scale=0.8,
  text=black,
  rotate=0.0
]{0.76};
\draw (axis cs:7,6) node[
  scale=0.8,
  text=black,
  rotate=0.0
]{0.79};
\draw (axis cs:6,8) node[
  scale=0.8,
  text=black,
  rotate=0.0
]{0.53};
\draw (axis cs:8,6) node[
  scale=0.8,
  text=black,
  rotate=0.0
]{0.53};
\draw (axis cs:6,9) node[
  scale=0.8,
  text=black,
  rotate=0.0
]{0.52};
\draw (axis cs:9,6) node[
  scale=0.8,
  text=black,
  rotate=0.0
]{0.52};
\draw (axis cs:7,8) node[
  scale=0.8,
  text=black,
  rotate=0.0
]{0.79};
\draw (axis cs:8,7) node[
  scale=0.8,
  text=black,
  rotate=0.0
]{0.8};
\draw (axis cs:7,9) node[
  scale=0.8,
  text=black,
  rotate=0.0
]{0.85};
\draw (axis cs:9,7) node[
  scale=0.8,
  text=black,
  rotate=0.0
]{0.85};
\draw (axis cs:8,9) node[
  scale=0.8,
  text=black,
  rotate=0.0
]{0.51};
\draw (axis cs:9,8) node[
  scale=0.8,
  text=black,
  rotate=0.0
]{0.51};
\end{axis}

\node [anchor=south] at (ax.north) {Observed};
\node [rotate=90, anchor=south] at (ax.west) {Predicted};
\end{tikzpicture}
\caption{Observed (upper right, blue) and predicted (lower left, orange) classification accuracies of binary spectral clustering on the Fashion-MNIST dataset \cite{xiao_fashion-mnist_2017}.}
\label{fig:classification_accuracy}
\end{figure}

We find a very good agreement between the observed and predicted accuracies, regardless of whether the problem is easy (e.g., Trouser vs Sandal) or hard (e.g., Bag vs Ankle Boot). This observation confirms the general scope of Theorem \ref{thm:clt}: starting from real data $\bX$ which is clearly \emph{not} Gaussian, the normal distribution naturally emerges in the fluctuations of the entries of the large-dimensional eigenvector $\hat{\bv}$.

\section{Conclusion} \label{sec:conclusion}

After recalling known results on spectral clustering under a general \textit{signal $+$ noise} random matrix model, we have shown that the entries of spiked eigenvectors have Gaussian fluctuations in the large-dimensional regime. This formalizes and clearly states a result which is often implicitly assumed in many problems, without ever being actually proven. The proposed proof relies solely on the rotational invariance of the noise. It is thus very general and can easily be extended to most standard spike models. Numerical experiments have demonstrated the universality of this phenomenon: the Gaussian behavior of the entries of spike eigenvectors can even be observed on real unprocessed data. This allows to accurately predict the classification performance of spectral clustering. An interesting problem for future work is to understand how these results extend to more exotic spike models such as \cite{lebeau_random_2022}.

\newpage

\IEEEtriggeratref{14}
\bibliographystyle{IEEEtran}
\bibliography{bibliography}

\end{document}